\documentclass[sigconf]{acmart}

\newtheorem{remark}{Remark}
\newtheorem{example}{Example}
\usepackage[dvipsnames]{xcolor}
\usepackage[most]{tcolorbox}
\usepackage{amsmath}
\usepackage{multirow}
\usepackage{subcaption}
\usepackage[normalem]{ulem}
\usepackage{stfloats}

\definecolor{riskhigh}{HTML}{B4413C}
\definecolor{risklow}{HTML}{4A7C59}
\newcommand{\hi}[1]{\textbf{\textcolor{riskhigh}{#1}}}
\newcommand{\lo}[1]{\textbf{\textcolor{risklow}{#1}}}
\definecolor{cardteal}{HTML}{2A9D8F}
\definecolor{riskhigh}{HTML}{B4413C}  
\definecolor{risklow}{HTML}{4A7C59}   

\newtcolorbox{formulacard}{
  enhanced,
  colback=cardteal!6,
  colframe=cardteal,
  boxrule=0.6pt,
  arc=4pt,
  left=8pt, right=8pt, top=2pt, bottom=2pt,
  drop shadow={black!10},
  title=\textbf{Learned model},
  fonttitle=\bfseries\small, coltitle=white,
  fontupper=\small,
  coltitle=white,
  colbacktitle=cardteal,
}

\newtcolorbox{rulecard}{
  enhanced, rounded corners, arc=4pt,
  colback=cardteal!6, colframe=cardteal, boxrule=0.6pt,
  left=10pt, right=10pt, top=2pt, bottom=2pt,
  drop shadow={black!10},
  fonttitle=\bfseries\small, coltitle=white,
  fontupper=\small,
  title=Readable rule,
}

\newtcolorbox{rulecardsmall}[1]{
  enhanced, rounded corners, arc=3pt,
  colback=blue!3, colframe=blue!30, boxrule=0.7pt,
  left=8pt, right=8pt, top=5pt, bottom=5pt,
  fonttitle=\bfseries\small, coltitle=blue!50!black,
  colbacktitle=blue!8, title=#1,
  before skip=4pt, after skip=4pt,
}

\definecolor{cardteal}{HTML}{2A9D8F}
\definecolor{riskhigh}{HTML}{B4413C}
\definecolor{risklow}{HTML}{4A7C59}

\newtcolorbox{modelcard}[1]{
  enhanced,
  colback=cardteal!6,
  colframe=cardteal,
  boxrule=0.6pt,
  arc=4pt,
  left=8pt, right=8pt, top=2pt, bottom=2pt,
  title=\textbf{#1},
  fonttitle=\bfseries\small,
  coltitle=white,
  colbacktitle=cardteal,
  fontupper=\footnotesize,
  before upper={
  \setlength{\abovedisplayskip}{2pt}
  \setlength{\belowdisplayskip}{2pt}
  \setlength{\abovedisplayshortskip}{2pt}
  \setlength{\belowdisplayshortskip}{2pt}
},
}

\newtcolorbox{rulecard2}[1]{
  enhanced,
  colback=cardteal!6,
  colframe=cardteal,
  boxrule=0.6pt,
  arc=4pt,
  left=8pt, right=8pt, top=3pt, bottom=3pt,
  title=\textbf{#1},
  fonttitle=\bfseries\small,
  coltitle=white,
  colbacktitle=cardteal,
  fontupper=\footnotesize,
}
\AtBeginDocument{%
  }

\setcopyright{none}

\acmConference[Preprint]
{Submitted manuscript}
{2026}
{}
\renewcommand\footnotetextcopyrightpermission[1]{}

\begin{document}

\title{How Simple Can It Get? From Interpretable Equations to Readable Rules for Financial Decision Making}

\author{Adia Lumadjeng}
\orcid{1234-5678-9012}
\correspondingauthor
\email{a.c.lumadjeng@uva.nl}
\affiliation{%
  \institution{University of Amsterdam}
  \city{Amsterdam}
  \country{the Netherlands}
}

\author{Ilker Birbil}
\email{s.i.birbil@uva.nl}
\affiliation{%
  \institution{University of Amsterdam}
  \city{Amsterdam}
  \country{the Netherlands}
}

\author{Erman Acar}
\email{e.acar@uva.nl}
\affiliation{%
  \institution{University of Amsterdam}
  \city{Amsterdam}
  \country{the Netherlands}
}


\begin{abstract}
  In regulated domains such as finance, a model that cannot be explained cannot be deployed, yet many interpretable classifiers defeat their own purpose by producing formulas with dozens of features that no regulator could read. We take the reverse direction. Starting from an interpretable classifier expressed as a single equation over the input features, we progressively simplify it into more readable forms, including a pruned monomial, a directional if--then rule, and the integer scorecards and tallies that finance already deploys. Because the equation is itself the predictive model rather than a post-hoc explanation we can directly quantify what is lost under each simplification. Across four financial datasets, we find that pruning is nearly free and that fidelity can erode faster than predictive performance, allowing simpler rules to remain effective classifiers without faithfully reproducing the original model. A human assessment shows that simplification improves perceived readability, while preferences for different representations vary by professional background. Beyond measuring these losses empirically, we show that some can be anticipated from the original model: we derive a bound on the change caused by pruning and predict how faithfully a rule retaining only the direction of each feature's effect preserves the original ranking.
\end{abstract}

\begin{CCSXML}
<ccs2012>
   <concept>
       <concept_id>10010147.10010257.10010258.10010259</concept_id>
       <concept_desc>Computing methodologies~Supervised learning</concept_desc>
       <concept_significance>300</concept_significance>
       </concept>
   <concept>
       <concept_id>10010405.10010406.10010423</concept_id>
       <concept_desc>Applied computing~Business rules</concept_desc>
       <concept_significance>300</concept_significance>
       </concept>
   <concept>
       <concept_id>10010405.10010406.10010412</concept_id>
       <concept_desc>Applied computing~Business process management</concept_desc>
       <concept_significance>100</concept_significance>
       </concept>
   <concept>
       <concept_id>10010147.10010257</concept_id>
       <concept_desc>Computing methodologies~Machine learning</concept_desc>
       <concept_significance>500</concept_significance>
       </concept>
 </ccs2012>
\end{CCSXML}

\ccsdesc[300]{Computing methodologies~Supervised learning}
\ccsdesc[300]{Applied computing~Business rules}
\ccsdesc[100]{Applied computing~Business process management}
\ccsdesc[500]{Computing methodologies~Machine learning}

\received{2 August 2026}

\maketitle

\section{Introduction}
When a lender rejects an application or a payment system freezes a transaction, someone is eventually owed a reason. In financial decision-making this is not only good practice but often a regulatory requirement: decisions must be explainable to a customer, an auditor, or a supervisor in terms a person can follow. Yet producing a useful explanation remains difficult. Credit default and fraud are typically imbalanced classification problems, where the events of interest form a small minority of observations. At the same time, interpretability is often treated as a structural property of a model, while the practical question is whether the resulting explanation can actually be understood and used by a human decision maker.

Interpretable-by-design classifiers address part of this problem by making the reasoning accessible from the model itself. One such classifier is ECSEL \citep{lumadjeng2026ecsel}, which learns a \emph{signomial} equation, i.e., a sum of power-law terms, that serves simultaneously as classifier and an explanation. In this work, we focus on its simplest single-term form: a \emph{monomial}, consisting of a product of input features raised to fitted exponents, and passed through a sigmoid to obtain the predicted probability. This monomial is the \textit{interpretable} equation we study throughout this paper. However, transparency by design does not guarantee readability in practice. Trained to predict loan default on a dataset, ECSEL produces the equation in Figure~\ref{fig:motivating-example}(a): a product over all forty-two features. Nothing in it is hidden, yet the resulting explanation is cumbersome to inspect and communicate. The very structure that makes the model transparent can therefore become a source of complexity when the equation grows large.

Figure~\ref{fig:motivating-example}(b) shows the same learned model after simplification into a directional rule over seven retained features. The resulting representation is more compact, but achieves this by discarding information: most features are removed, and the retained exponent magnitudes are replaced by their directions. The two panels therefore expose a tension that model transparency alone does not resolve: making an explanation easier to read requires deciding which parts of an already-interpretable model can be removed.

\begin{figure}[h]
\centering
\begin{minipage}{\columnwidth}
\centering
\begin{formulacard}

$
\text{risk}(x) ={} 5.91 \cdot
x_{\text{int\_received}}^{\,1.19} 
x_{\text{last\_pymt}}^{-0.85} 
x_{\text{time\_pymnt}}^{\,0.86}\cdot \ldots \footnotesize\text{(38 more features)} 
$

\end{formulacard}
{\small (a) The fitted monomial: a product over all 42 features.}
\end{minipage}

\vspace{0.7em}

\begin{minipage}{\columnwidth}
\centering
\begin{rulecard}
Predict \textit{default} when \textbf{total interest received} and \textbf{time since last payment} are \hi{high}, 
\textbf{total payment} and \textbf{last payment} are \lo{low}, 
\end{rulecard}
{\small (b) The same model as a readable rule over seven features.}
\end{minipage}

\caption{The same loan-default classifier as the full 42-feature monomial (a) and a simplified seven-feature directional rule (b).}
\label{fig:motivating-example}
\end{figure}

The interpretability literature has largely approached this problem in the forward direction: constructing models that remain transparent while retaining predictive performance \citep{lou2013accurate,letham2015interpretable}. We study the reverse direction. Given an already-interpretable model, \emph{how far can its representation be simplified while preserving predictive performance and fidelity to the original model, while becoming easier for people to understand and use?} This distinction is particularly relevant for interpretable-by-design approaches such as ECSEL. The original work evaluates the fitted model at full size while using pruning to produce more legible displayed equations \citep{lumadjeng2026ecsel}. It does not evaluate how these simplified representations perform as classifiers or how faithfully they preserve the behavior of the fitted model. We make this transition explicit. Starting from a fitted single-term monomial, we construct controlled simplifications that alter different components of its learned structure: pruning weakly contributing features, reducing exponent magnitudes to their directions, and replacing continuous feature values by binary conditions. This yields representations ranging from a pruned monomial to a sign-only directional rule and point-based scorecard and tally. Each representation is evaluated as the classifier it displays: the rule shown to the reader is also the rule whose predictions are evaluated. This lets us distinguish its predictive performance, its fidelity to the original model, and its perceived readability and practical preference in our human assessment.

Our contributions are as follows:
\begin{enumerate}
\item \textbf{Controlled simplification of an interpretable classifier.}
We derive readable rules directly from a learned monomial, with each representation preserving a different subset of the information encoded in the original interpretable equation.
\item \textbf{Quantifying the cost of simplification.}
Across financial datasets, we evaluate each derived representation as a classifier and against the original monomial, measuring predictive performance, rank fidelity, and calibration to identify which information can be removed at little cost.

\item \textbf{Predicting fidelity before simplification.}
For the directional rule, we derive a model-based prediction of rank fidelity from the fitted exponents and feature covariance, allowing the expected effect of sign-flattening to be assessed before evaluating the resulting rule.

\item \textbf{Assessing human readability.}
We conduct a human assessment with finance/risk professionals and AI/ML researchers to determine whether simplification translates into perceived ease and practical preference, and how these differ across professional backgrounds.\end{enumerate}

Finally, we compare post-hoc pruning with sparsity imposed during training through iterative hard thresholding. This serves as a robustness check on the extracted structure rather than a separate contribution: agreement between the two routes provides complementary evidence that the retained features are not solely an artifact of post-hoc pruning.

\section{Related Work}

Explainable artificial intelligence has traditionally addressed the lack of transparency of complex machine learning models through post-hoc explanations. Methods such as LIME \citep{ribeiro2016should} and SHAP \citep{lundberg2017unified} are widely used model-agnostic methods that explain individual predictions without modifying the underlying classifier. In credit scoring, \citet{chen2024imbalanced} find that their explanations become less stable as class imbalance increases. Because these explanations are produced post hoc, the explanation remains separate from the predictive model itself.

This limitation has motivated a complementary line of research on \emph{inherently interpretable} models, in which the predictor is designed to be transparent. Examples include generalized additive models \citep{lou2013accurate}, Bayesian Rule Lists \citep{letham2015interpretable}, Optimal Classification Trees \citep{bertsimas2017optimal}, and CORELS \citep{angelino2018learning}, with recent work extending interpretable trees to credit scoring \citep{tu2025oct}. Survey work distinguishes such ante-hoc approaches from post-hoc explanations \citep{dimarino2025survey}. Structural transparency, however, does not ensure readability: rule lists, trees, and additive models become harder to follow as they grow in size, depth, or interaction complexity. ECSEL \citep{lumadjeng2026ecsel} belongs to this family, representing the classifier through signomial score functions whose exponents admit interpretations. It faces the same tension, since the equation presented to a reader is pruned while the reported model is not.


Credit scoring provides a long-standing setting in which compact model representations are valued. Traditional scorecard development commonly bins characteristics, supporting model comprehensibility and review \citep{szepannek2022overview}. Modern approaches learn sparse integer scoring systems, including SLIM \citep{ustun2016slim}, RiskSLIM \citep{ustun2019risk}, and FasterRisk \citep{liu2022fasterrisk}, while \citet{chi2026risk} directly optimize such scores for decision net benefit. Structured lending models have also combined interpretable feature-group scores into an overall risk estimate \citep{chen2022holistic}. Model format and complexity can affect users' accuracy, response time, and confidence \citep{huysmans2011empirical}. Whereas these approaches learn compact representations directly from data, we derive them from an already fitted interpretable model. ECSEL \citep{lumadjeng2026ecsel} provides this starting point through its equation-based representation.

\section{Approach}
We first introduce the monomial classifier that serves as the starting point of our study, then derive a set of readable representations by systematically simplifying the information it contains.

We consider binary classification on tabular data, with features $x \in \mathbb{R}^m$ and labels $y \in \{0,1\}$, where $y=1$ denotes the minority risk class (e.g., \emph{default} or \emph{fraud}). Given training data $\{(x^{(i)},y^{(i)})\}_{i=1}^n$, we seek a scoring function that ranks positive instances above negatives and labels them after applying a decision threshold.

\paragraph{The Model.}
We build on ECSEL \citep{lumadjeng2026ecsel}, which learns a signomial equation as a joint classifier and explanation. A signomial is a sum of $K$ monomial terms, each a product of the features raised to fitted exponents. Because the exponents are real-valued, every feature is scaled to a fixed positive range $[\ell, u]$ with $0 < \ell$ before fitting; this scaling is fit on the training split alone. The model computes a single logit
\begin{equation}
z(x) = \sum_{k=1}^{K} \alpha_k \prod_{j=1}^{m} x_j^{\beta_{kj}}, \qquad P(y=1 \mid x) = \sigma\!\big(z(x)\big),
\end{equation}
where $\alpha_k \in \mathbb{R}$ are term coefficients, $\beta_{kj} \in \mathbb{R}$ are the fitted exponents, and $\sigma(z) = 1/(1+e^{-z})$ is the sigmoid. All tasks here are binary, so a single logit $z$ and a sigmoid suffice; the multiclass form is not needed, but is given in \cite{lumadjeng2026ecsel}. Parameters $\{\alpha_k, \beta_{kj}\}$ are learned by gradient descent, with sparsity regularization applied to the exponents. The number of terms $K$ controls model expressivity, ranging from a single monomial ($K{=}1$) to richer additive combinations of them ($K>1$). 

\paragraph{The Single Monomial.}
We restrict the general ECSEL formulation to its simplest case, $K{=}1$, yielding:
\begin{equation}\label{eq:loglinear}
z(x) = \alpha \prod_{j=1}^{m} x_j^{\beta_j}
\quad \xrightarrow{\log|\cdot|} \quad
\log|z(x)| = \log|\alpha| + \sum_{j=1}^{m} \beta_j \log x_j .
\end{equation}
In log-space, this monomial is linear: $|\beta_j|$ determines the feature $j$'s strength on the log-score, while the signs of $\beta_j$ and $\alpha$ determine its direction. For $\alpha>0$, positive exponents increase predicted risk and negative exponents decrease it; these directions reverse for $\alpha<0$.

A monomial explanation can therefore be viewed as containing three interpretable components: its \emph{support}, which determines which features participate; its \emph{direction}, determined by the sign of the exponents together with the sign of $\alpha$; and its \emph{magnitude}, given by the absolute exponent values. These components provide the basis for the controlled simplifications studied in Section~\ref{sec:forms}: each readable representation removes or preserves a different subset of the information contained in the original model.

\paragraph{Why the Monomial.}
We focus on ECSEL because it is an equation-learning method in which the learned expression serves explicitly as both the classifier and its explanation \citep{lumadjeng2026ecsel}. Its closed-form equations also support interpretability properties including global feature behavior, decision-boundary analysis, and local feature attribution. ECSEL therefore provides a natural setting for studying the gap between formal interpretability and practical readability, and for measuring what is preserved as an interpretable equation is progressively simplified.

Within ECSEL, we study the single monomial ($K=1$), its simplest equation form. This isolates the effect of simplifying the learned representation without introducing the additional structure of multiple terms. The same perspective may extend to other interpretable models, while multi-term signomials provide a natural next step within equation learning.

\subsection{Controlled Simplification of the Monomial} \label{sec:forms}

A learned monomial contains several kinds of information that contribute differently to its interpretation. Throughout this paper, we distinguish four components:

\begin{enumerate}
    \item \textbf{Feature support:} which features participate in the rule.
    \item \textbf{Feature values:} whether features are represented by their continuous values or by binary conditions.
    \item \textbf{Direction:} whether increasing a feature increases or decreases the risk, determined by the sign of its exponent.
    \item \textbf{Magnitude:} how strongly each feature influences the prediction, determined by the absolute value of its exponent.
\end{enumerate}

These components can be simplified independently, letting us isolate the cost of each simplification rather than treating readability as a single notion. Table~\ref{tab:components} summarizes the information retained by each representation.

\begin{table}[h]
\centering
\caption{Information retained by each representation.}
\label{tab:components}
\small
\begin{tabular}{lcccc}
\toprule
Form & Support & Values & Direction & Magnitude \\
\midrule
Full monomial    & All     & Cont. & \checkmark & Exact \\
Pruned monomial  & Reduced & Cont. & \checkmark & Exact \\
Directional rule & Reduced & Cont. & \checkmark & -- \\
Scorecard        & Reduced & Binary & \checkmark & Quantized \\
Tally            & Reduced & Binary & \checkmark & -- \\
\bottomrule
\end{tabular}
\end{table}

\subsubsection{Controlling Feature Support: the Pruned Monomial}
The first simplification reduces the feature support while leaving the remaining components unchanged: we retain $r$ features and set the remaining exponents to zero, so the pruned form is still a monomial over continuous values, with the same directions and the same exact magnitudes for the retained features. Only the number of participating features changes (Table~\ref{tab:components}).

This raises the question of which features to drop. We prune by exponent magnitude, keeping the largest $|\beta_j|$. In the log-space form of Eq.~\ref{eq:loglinear}, each feature contributes an additive term $\beta_j \log x_j$, so pruning by $|\beta_j|$ removes the terms with the smallest coefficients in absolute value. Because the features are scaled to a bounded positive range, each deleted term can be bounded in terms of its exponent, yielding a bound on the change in the log-score.

\begin{remark}[Worst-case perturbation under pruning]
\label{rem:prune-bound}
Let $S$ denote the set of features removed by pruning. Define the full and pruned log-scores as
\[
    s_{\mathrm{full}}(x):=\log|\alpha|+\sum_{j=1}^{m}\beta_j\log x_j, \quad s_{\mathrm{pruned}}(x):=\log|\alpha|+\sum_{j\notin S}\beta_j\log x_j.
\]
With features scaled to $[\ell,u]$, $0<\ell$, and $L:=\max\{|\log\ell|,\,|\log u|\}$, the log-scores before and after pruning satisfy
\[
\bigl|s_{\mathrm{full}}(x)-s_{\mathrm{pruned}}(x)\bigr|
= \Bigl|\sum_{j\in S}\beta_j\log x_j\Bigr|
\le L\sum_{j\in S}|\beta_j|,
\qquad \text{for every } x .
\]
\end{remark}

The bound follows from the triangle inequality and $|\log x_j| \le L$. Its content is a selection criterion rather than a performance guarantee: the right-hand side depends on the discarded exponents only through $\sum_{j \in S}|\beta_j|$, so among all ways of discarding $m-r$ features, dropping the smallest magnitudes minimizes the worst-case perturbation of the log-score. This motivates exponent magnitude as the pruning criterion, but does not guarantee preservation of predictive performance or ranking. We therefore choose $r$ on validation data as the smallest value whose PR-AUC remains within a fixed tolerance of the full monomial (see Section~\ref{sec:setup}).


\subsubsection{Binarizing Feature Values: Scorecard and Tally Rules}
The second simplification changes how the retained features are represented. The pruned monomial preserves the continuous feature values, whereas scorecards replace each feature with a binary risk condition. Continuous measurements are therefore reduced to a single yes/no statement, while the feature support and the direction of each effect are preserved. This follows the logic of traditional credit and clinical scoring systems, where an applicant accumulates points or meets conditions until a decision threshold is reached.


Each retained feature is converted into a binary condition with cutpoint $c_j$, set to the feature's training-set median (the midpoint for binary features). The signs of $\beta_j$ and $\alpha$ determine whether larger or smaller values indicate greater risk, and thus whether the condition is $x_j \ge c_j$ or $x_j \le c_j$. These thresholds are fixed rather than optimized to isolate the effect of replacing continuous values by binary conditions rather than learning a new scorecard. Cutpoint selection is a separate scorecard-modelling choice in credit risk \citep{szepannek2022overview}.

From these binary conditions we derive two additive representations. The first is the \emph{tally rule}, which treats every condition equally and counts how many are satisfied. Let $\mathbf{1}_j(x)$ denote the indicator that instance $x$ satisfies condition $j$. The tally rule is given by
\begin{equation}\label{eq:tally}
T(x)=\sum_{j=1}^{r}\mathbf{1}_j(x),
\qquad
\text{flag when }T(x)\ge t.
\end{equation}
where $t$ is the threshold on the number of satisfied conditions.

The second is the \emph{scorecard}, which retains a coarse notion of feature importance by assigning each condition an integer point value. Exponent magnitudes are rescaled so that the strongest retained condition receives $P_{\max}$ points, with every retained condition worth at least one:
\begin{equation}
    p_j=\max\left\{1,\;\operatorname{round}\left(P_{\max}\frac{|\beta_j|}{\max_k |\beta_k|}\right)\right\}.
\end{equation}
The floor keeps every feature in the support visible on the card: without it, a condition whose exponent is small relative to the largest would be assigned zero points and effectively removed from the scorecard. The resulting score is
\begin{equation}
\label{eq:scorecard}
V(x)=\sum_{j=1}^{r}p_j\,\mathbf{1}_j(x), \qquad \text{flag when }V(x)\ge v.
\end{equation}
where $v$ is the threshold on the total number of points.

The two representations differ only in how much magnitude information they retain: the tally is the scorecard with every point value set to one. The scorecard preserves relative importance approximately through integer points, whereas the tally removes it altogether. Both, however, discard the continuous feature values that remain available to the pruned monomial.

\begin{example}[Scoring an Applicant.]
Suppose pruning leaves the three-feature monomial
\begin{equation}
    z(x)=c\,x_{\mathrm{utilization}}^{+1.4}x_{\mathrm{months\_employed}}^{-0.9} x_{\mathrm{age}}^{-0.4}, \qquad c > 0,
\end{equation}
for predicting credit-card default. The signs give three conditions: high utilization, low employment, and low age. Consider an applicant with high utilization, low employment, and not low age. The corresponding indicators are therefore
\[
(\mathbf{1}_{\mathrm{utilization}},
\mathbf{1}_{\mathrm{months\_employed}},
\mathbf{1}_{\mathrm{age}})
=(1,1,0).
\]
With $P_{\max}=5$, rescaling the exponent magnitudes $(1.4,0.9,0.4)$ relative to the largest gives scorecard points $(5,3,1)$. The two representations therefore score the applicant 
\begin{align}
    T(x)&=1+1+0=2\ge t=2,\\
    V(x)&=5(1)+3(1)+1(0)=8\ge v=6.
\end{align}

Both rules therefore flag the applicant as risky. They need not agree in general: the tally counts every satisfied condition equally, whereas the scorecard weights them by their relative importance.
\end{example}

\subsubsection{Removing Effect Magnitudes: The Directional Rule}
\label{sec:directional}
The final simplification removes only the effect magnitudes while leaving the remaining components unchanged. Unlike the scorecard and tally, it preserves the continuous feature values and their multiplicative combination, but discards the relative strength of each feature. This is achieved by replacing every exponent with its sign, giving the directional log-score
\begin{equation}
  s_{\mathrm{dir}}(x) \;=\; \sum_{j=1}^{r}
  \operatorname{sign}(\beta_j)\,\log x_j .
  \label{eq:directional}
\end{equation}
Viewed in log-space, the directional rule replaces the coefficient vector $\beta$ with its sign vector. The feature support and directions are preserved, and only the coefficient magnitudes are removed. The intercept $\log|\alpha|$ is irrelevant to the ranking and therefore omitted. This is the most aggressive of our simplifications, and the one whose cost is least predictable from the form alone. Intuitively, if all retained features contribute equally, replacing their exponent magnitudes by one should have little effect on the ranking, whereas if a few features dominate the monomial, flattening the exponent magnitudes may substantially alter it.


The following proposition characterizes the similarity between the full and directional scores after exponent flattening. This similarity can be expressed as a Pearson correlation computed from the fitted exponents and feature covariance. Our fidelity measure, however, is the tie-corrected Kendall's $\tau_b$ \citep{lindskog2003kendall}. Under elliptical feature distributions, Greiner's classical relation \citep{greiner1909} links Pearson correlation to Kendall's $\tau$. We therefore use this relation to predict rank fidelity, noting that $\tau$ and $\tau_b$ can differ when ties are frequent.

\begin{proposition}[Fidelity after exponent flattening]
\label{prop:fidelity}
Write $w = (\log x_1, \dots, \log x_m)$ and let $\Sigma$ denote its covariance. Up to constants, the full and directional log-scores are the linear functions $s_{\mathrm{full}} = \beta^{\top} w$ and $s_{\mathrm{dir}} = d^{\top} w$, where $d_j = \operatorname{sign}(\beta_j)$ on the retained set $S$, $|S| = r$, and $d_j = 0$ otherwise. Their correlation is
\begin{equation}
  \rho \;=\;
  \frac{\beta^{\top} \Sigma\, d}
       {\sqrt{\beta^{\top} \Sigma\, \beta}\,
        \sqrt{d^{\top} \Sigma\, d}} ,
  \label{eq:rho}
\end{equation}
which holds without distributional assumptions. If $w$ is elliptically distributed, Greiner's relation \citep{greiner1909} links this correlation to the rank fidelity between the two scores:
\begin{equation}
  \tau \;=\; \tfrac{2}{\pi}\arcsin(\rho).
  \label{eq:greiner}
\end{equation}
\end{proposition}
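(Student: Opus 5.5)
The proposition has two parts. The covariance identity \eqref{eq:rho} is elementary linear algebra. The Kendall relation \eqref{eq:greiner} needs a reduction to a classical bivariate result.

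\textbf{Step 1: the scores are linear in $w$.} Taking logs in Eq.~\eqref{eq:loglinear} gives $s_{\mathrm{full}}(x) = \log|\alpha| + \beta^\top w$. Eq.~\eqref{eq:directional} gives $s_{\mathrm{dir}}(x) = d^\top w$, where we extend $d$ by zeros off $S$. Additive constants change neither covariances nor rankings, so we drop them.

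\textbf{Step 2: the correlation.} Bilinearity of covariance gives
\[
\operatorname{Cov}(\beta^\top w, d^\top w) = \beta^\top \Sigma d, \qquad \operatorname{Var}(\beta^\top w) = \beta^\top\Sigma\beta, \qquad \operatorname{Var}(d^\top w) = d^\top \Sigma d.
\]
Dividing the covariance by the product of the standard deviations yields \eqref{eq:rho}. This uses only that $w$ has finite second moments, which holds automatically: scaling confines each $x_j$ to $[\ell,u]$, so each $\log x_j$ is bounded. No distributional assumption is needed. We also assume both variances are positive, i.e.\ neither score is almost surely constant, so that $\rho$ is defined.

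\textbf{Step 3: the Kendall relation.} Suppose $w$ is elliptical, $w \stackrel{d}{=} \mu + A R U$, with $U$ uniform on the sphere and $R \ge 0$ independent of $U$. Then the pair $(\beta^\top w, d^\top w) = M w$, with $M$ the $2\times m$ matrix of rows $\beta^\top$ and $d^\top$, is again elliptical. Its dispersion matrix is $M\Sigma M^\top$ (up to scale), whose normalized off-diagonal entry is exactly $\rho$ from Step 2. We then invoke the classical result for bivariate elliptical vectors without atoms (Greiner for the Gaussian case; Lindskog--McNeil--Schmock \citep{lindskog2003kendall} in general): Kendall's $\tau = \tfrac{2}{\pi}\arcsin\rho$. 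Because $\tau$ is invariant under strictly increasing transforms, the constants and intercept dropped in Step 1 do not matter.

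\textbf{Main obstacle.} The delicate point is Step 3. We must check that the linear projection of an elliptical vector is elliptical, which follows from the stochastic representation. We must also verify the no-atom condition, which holds when $P(R=0)=0$ and $M\Sigma M^\top$ is nondegenerate. A subtler issue is that bounded features cannot literally be Gaussian. The relation is therefore stated conditionally on the elliptical assumption, as a model-based prediction. We will note that with ties in the empirical data the measured $\tau_b$ may deviate from this $\tau$, as the paper remarks.
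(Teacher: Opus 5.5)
Your proposal is correct and follows the paper's proof sketch essentially step for step. You use bilinearity of covariance to get \eqref{eq:rho}, then closure of elliptical families under the linear map $w \mapsto (\beta^\top w, d^\top w)$, then the arcsine relation for the resulting bivariate pair; your added conditions (positive variances, no atoms, and Lindskog--McNeil--Schmock rather than Greiner alone for the non-Gaussian elliptical case) make explicit hypotheses the paper's sketch leaves implicit.
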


\begin{proof}[Proof sketch]
For any fixed vectors $a,b$,
\[
\operatorname{Cov}(a^\top w,b^\top w)=a^\top\Sigma b,
\qquad
\operatorname{Var}(a^\top w)=a^\top\Sigma a.
\]
Substituting $a=\beta$ and $b=d$ into the definition of the Pearson correlation between $s_{\mathrm{full}}$ and $s_{\mathrm{dir}}$ yields \eqref{eq:rho}, which requires only that the second moments of $w$ exist. Since $(s_{\mathrm{full}},s_{\mathrm{dir}})$ is a linear transformation of $w$, it is elliptically distributed whenever $w$ is. Equation~\eqref{eq:greiner} therefore follows directly from Greiner's classical relation.
\end{proof}

Equation~\eqref{eq:rho} makes this intuition precise: $\rho$ is high when replacing the fitted exponents by their signs leaves the two score directions well aligned under the observed feature covariance, and low when exponent magnitudes or feature dependencies make that flattening consequential. The proposition therefore yields a prediction of directional-rule fidelity directly after training: $\beta$ and $d$ follow from the fitted exponents and $\Sigma$ from the training data. No held-out data or evaluation of the directional rule is required. Section~\ref{sec:pred-fidelity} compares the resulting predicted Kendall's $\tau$ with observed fidelity.

\subsection{Sparsity During Training}
Every simplification so far starts from a fitted monomial with all $m$ exponents non-zero. This is not incidental: ECSEL regularizes the exponents with an $\ell_1$ penalty, which shrinks small coefficients but under gradient-based optimization does not reliably drive them to exact zeros, so a compact rule always requires a separate reduction step afterwards. As an alternative, we use iterative hard thresholding (IHT) \citep{BlumensathDavies2008IT}, which retains only the $s$ largest exponents after each gradient step. This provides an independent route to sparsity for assessing whether the extracted structure depends on post-hoc pruning.

\subsection{Human Assessment of Readable Forms}\label{sec:human}
The simplifications are intended to produce representations that are not only formally simpler, but easier to understand and use. We therefore conduct an anonymous human assessment using the \textsc{Loan} and \textsc{FraudEcom} classifiers. For each dataset, participants are presented with the pruned monomial, directional rule, scorecard, and tally derived from the fitted classifier, without information about predictive performance. They rate each representation's ease of understanding on a five-point Likert scale and select the representation they would prefer for practical use. This allows us to distinguish perceived readability from practical preference and examine whether these judgments differ by professional background.

\section{Experimental Setup}
\label{sec:setup}

\paragraph{Datasets.}
We evaluate on four public financial classification datasets, summarized in Table~\ref{tab:datasets}. They cover consumer credit default (\textsc{Loan} \citep{lendingclubdata}, \textsc{Default} \citep{yeh2009comparisons}) and fraud detection (\textsc{FraudEcom} \citep{fraudecommerce}, \textsc{Creditcard} \citep{dalpozzolo2015calibrating}), range over an order of magnitude in size, and span positive-class rates from $0.17\%$ to $22\%$. The datasets also differ in feature type: \textsc{Loan} and \textsc{Default} contain named continuous attributes, \textsc{FraudEcom} combines continuous variables with small integer counts, and \textsc{Creditcard} consists of anonymized principal components. As discussed in Section~\ref{sec:results}, these differences influence which simplifications remain effective.

\begin{table}[h]
\small
\caption{Financial datasets, ordered by class imbalance; positive-class rate is the PR-AUC base rate.}
\label{tab:datasets}
\begin{tabular}{lcccc}
\toprule
Dataset & $n$ & Features & Pos. rate & Domain \\
\midrule
\textsc{Creditcard} & 284{,}807 & 30 & 0.17\% & card fraud \\
\textsc{FraudEcom}  & 151{,}112 &  6 & 9.4\%  & account fraud \\
\textsc{Loan}       & 395{,}492 & 42 & 10.1\% & loan default \\
\textsc{Default}    &  30{,}000 & 26 & 22.1\% & credit default \\
\bottomrule
\end{tabular}
\end{table}

\paragraph{Preprocessing and evaluation.}
Each dataset is split into training, validation, and test sets (60/20/20, stratified by label), repeated over five random seeds. Following \citet{lumadjeng2026ecsel}, features are scaled to $[0.01,10.01]$, with the scaler fit on the training split only. Hyperparameters were optimized using Optuna on the training split, with the best configuration selected by cross-validation. Pruning level and all decision thresholds are chosen on the validation split. The test split is used only for the final evaluation. Unless stated otherwise, reported results are means over the five seeds.

\paragraph{Models and readable forms.}
Our model is the ECSEL monomial. From each fitted monomial we derive the four readable forms of Section~\ref{sec:forms}: the pruned monomial, directional rule, scorecard, and tally, and evaluate each as a classifier in its own right. The pruning level is selected as the smallest number of retained features $r$ whose validation PR-AUC is within 0.01 of the full monomial. We additionally compare post-hoc pruning with an in-training sparse variant based on IHT. Throughout, scorecards use a maximum of five points ($P_{\max}=5$), while scorecard and tally thresholds are selected on the validation split. 

\paragraph{Metrics.}
Because all datasets are imbalanced, we use the area under the precision--recall curve (PR-AUC) as the primary performance metric and report the corresponding base rate for reference. To measure how faithfully a simplified form reproduces the full monomial, we report Kendall's $\tau_b$ between their scores,
\begin{equation}
    \tau_b =\frac{C-D}{\sqrt{(C+D+T_s)(C+D+T_f)}},
\end{equation}
where $C$ and $D$ denote the number of concordant and discordant pairs, and $T_s$ and $T_f$ account for ties in the simplified and full scores, respectively. For the directional rule, we additionally compare the observed $\tau_b$ with the value $\tau_\text{pred}$ predicted by Proposition~\ref{prop:fidelity}, computed solely from the fitted exponents and the training data.

Finally, we assess whether the scores produced by each representation can be converted into reliable probability estimates. Since the directional rule, scorecard, and tally do not natively output probabilities, we calibrate each representation on the validation set using isotonic regression and report its expected calibration error (ECE) on the test set,
\begin{equation}
    \mathrm{ECE} = \sum_{b=1}^{B} \frac{|b|}{n} \left| \mathrm{acc}(b)-\mathrm{conf}(b) \right|,
\end{equation}
where the test set is partitioned into $B$ equal-mass bins by predicted probability, $|b|$ denotes the number of instances in bin $b$, $\mathrm{acc}(b)$ its observed positive rate, and $\mathrm{conf}(b)$ its mean predicted probability. ECE therefore assesses the reliability of the recalibrated probabilities. Unless stated otherwise, all reported metrics are averaged over the five random seeds.

\paragraph{Human assessment.}
Of 36 respondents, 34 were included in the subgroup analysis: 14 with a finance/risk background (including three also reporting AI/ML research) and 20 AI/ML researchers. Two respondents reporting neither background were excluded.

\section{Results}\label{sec:results}
We first quantify the cost of each controlled simplification and examine when different representations succeed or fail across datasets. We next evaluate the prediction of Proposition~\ref{prop:fidelity}, assess the readability of the resulting explanations in a human study, and finally compare post-hoc pruning with in-training sparsity.

Figure~\ref{fig:ruleexamples} shows the resulting representations for \textsc{Loan} and \textsc{FraudEcom}; the remaining datasets undergo the same controlled simplifications but are omitted for brevity. These are also the representations presented to participants in the human assessment.

\begin{figure*}[t]
\centering

\begin{minipage}[t]{0.485\textwidth}
\centering
\textbf{(a) \textsc{Loan}}\\[2pt]

\begin{modelcard}{Pruned monomial}
\[
\begin{aligned}
z_{\text{default}}(x) \propto\;
  &x_{\text{int\_received}}^{+1.92}\,
   x_{\text{total\_pymnt}}^{-1.77}\,
   x_{\text{mths\_since\_pymnt}}^{+0.83}\,
   x_{\text{last\_pymnt}}^{-0.76}\\
  &{}\times
   x_{\text{annual\_inc}}^{+0.22}\,
   x_{\text{mths\_since\_issue}}^{-0.18}\,
   x_{\text{out\_prncp}}^{-0.12}.
\end{aligned}
\]
\end{modelcard}

\begin{rulecard2}{Directional rule}
Predict \textit{default} when
\textbf{total interest received} is \hi{high},
\textbf{total payment} is \lo{low},
\textbf{time since last payment} is \hi{high},
\textbf{last payment} is \lo{low},
\textbf{income} is \hi{high},
\textbf{time since issue} is \lo{low}, and
\textbf{outstanding principal} is \lo{low}.
\end{rulecard2}

\begin{rulecard2}{Scorecard}
Add $5$ points for \textbf{total interest received} \hi{high};
$5$ for \textbf{total payment} \lo{low};
$2$ for \textbf{time since last payment} \hi{high};
$2$ for \textbf{last payment} \lo{low};
$1$ for \textbf{income} \hi{high};
$1$ for \textbf{time since issue} \lo{low}; and
$1$ for \textbf{outstanding principal} \lo{low}.

Flag as \textit{default} when the total score is at least $11$ out of $17$.
\end{rulecard2}

\begin{rulecard2}{Tally}
Flag as \textit{default} when at least $6$ of the $7$ conditions hold:
\textbf{total interest received} \hi{high};
\textbf{total payment} \lo{low};
\textbf{time since last payment} \hi{high};
\textbf{last payment} \lo{low};
\textbf{income} \hi{high};
\textbf{time since issue} \lo{low}; and
\textbf{outstanding principal} \lo{low}.
\end{rulecard2}

\end{minipage}
\hfill
\begin{minipage}[t]{0.485\textwidth}
\centering
\textbf{(b) \textsc{FraudEcom}}\\[5pt]

\begin{modelcard}{Pruned monomial}
\[
\begin{aligned}
z_{\text{fraud}}(x) \propto\;
  &x_{\text{ip\_shared\_count}}^{+0.56}\,
   x_{\text{device\_shared\_count}}^{+0.27}
   x_{\text{sex\_bin}}^{+0.01}\,
   x_{\text{purchase\_value}}^{-0.01}.
\end{aligned}
\]
\end{modelcard}

\begin{rulecard2}{Directional rule}
Predict \textit{fraud} when
\textbf{IP shared count} is \hi{high},
\textbf{device shared count} is \hi{high},
the \textbf{sex indicator} is $1$, and
\textbf{purchase value} is \lo{low}.
\end{rulecard2}

\begin{rulecard2}{Scorecard}
Add $5$ points for \textbf{IP shared count} \hi{high};
$2$ for \textbf{device shared count} \hi{high};
$1$ for \textbf{sex indicator} $=1$; and
$1$ for \textbf{purchase value} \lo{low}.

\smallskip
Flag as \textit{fraud} when the total score is at least $8$ out of $9$.
\end{rulecard2}

\begin{rulecard2}{Tally}
Flag as \textit{fraud} when all $4$ conditions hold:
\textbf{IP shared count} \hi{high};
\textbf{device shared count} \hi{high};
\textbf{sex indicator} $=1$; and
\textbf{purchase value} \lo{low}.
\end{rulecard2}
\end{minipage}

\caption{Controlled simplifications of representative \textsc{Loan} and \textsc{FraudEcom} classifiers (seed 45). Each panel derives four representations from the same retained features.}
\label{fig:ruleexamples}
\end{figure*}

\subsection{The Cost of Controlled Simplification}
Table~\ref{tab:cost} reports predictive performance, fidelity, and recalibrated ECE for the controlled simplifications of Section~\ref{sec:forms}. PR-AUC base rates are 0.101 (\textsc{Loan}), 0.094 (\textsc{FraudEcom}), 0.221 (\textsc{Default}), and 0.002 (\textsc{Creditcard}). Across five seeds, variability is generally small for the full and pruned monomials but larger for some aggressive simplifications. We discuss several patterns below.

\begin{table}[h]
\centering
\small
\setlength{\tabcolsep}{3.5pt}
\caption{Predictive performance, fidelity, and recalibrated ECE
(mean over five seeds).}
\label{tab:cost}

\begin{tabular}{llccccc}
\toprule
Dataset & Metric & Full & Pruned & Directional & Scorecard & Tally \\
\midrule

\multirow{3}{*}{\textsc{Loan}}
& PR-AUC   & 0.886 & 0.885 & 0.642 & 0.329 & 0.300 \\
& $\tau_b$ & 1.000 & 0.846 & 0.568 & 0.206 & 0.196 \\
& ECE      & 0.001 & 0.001 & 0.001 & 0.002 & 0.002 \\
\midrule

\multirow{3}{*}{\textsc{FraudEcom}}
& PR-AUC   & 0.647 & 0.647 & 0.639 & 0.333 & 0.291 \\
& $\tau_b$ & 1.000 & 0.845 & 0.831 & 0.670 & 0.656 \\
& ECE      & 0.004 & 0.004 & 0.004 & 0.004 & 0.004 \\
\midrule

\multirow{3}{*}{\textsc{Default}}
& PR-AUC   & 0.380 & 0.377 & 0.381 & 0.303 & 0.334 \\
& $\tau_b$ & 1.000 & 0.820 & 0.744 & 0.375 & 0.530 \\
& ECE      & 0.014 & 0.016 & 0.016 & 0.014 & 0.016 \\
\midrule

\multirow{3}{*}{\textsc{Creditcard}}
& PR-AUC   & 0.677 & 0.675 & 0.621 & 0.018 & 0.017 \\
& $\tau_b$ & 1.000 & 0.698 & 0.614 & 0.259 & 0.406 \\
& ECE      & 0.000 & 0.000 & 0.000 & 0.000 & 0.000 \\
\bottomrule
\end{tabular}
\end{table}

\paragraph{Pruning is nearly free.}
Pruning selected under the validation tolerance generalizes with little predictive loss on the held-out test sets. The largest reduction in PR-AUC is 0.008 (\textsc{FraudEcom}), while \textsc{Loan} and \textsc{Creditcard} lose only 0.001 and 0.002, respectively. Fidelity remains high, with Kendall's $\tau_b$ between 0.70 and 0.85. Thus, substantial reductions in feature support can be achieved without materially degrading held-out predictive performance.

\paragraph{Binarizing feature values is costly.}
On \textsc{Loan}, PR-AUC falls from $0.885$ for the pruned monomial to $0.329$ for the scorecard, and $0.300$ for the tally. Similar reductions occur on \textsc{Creditcard}, where both point-based forms approach the dataset base rate. Unlike the other datasets, however, the \textsc{Creditcard} features are anonymized principal components rather than meaningful financial variables. Thresholding these latent components discards much of the information contained in their continuous values, making them particularly unsuitable for point-based rules. We return to this observation in the next subsection. The cost of binarizing feature values is therefore substantially larger than that of removing effect magnitudes, suggesting that, on these datasets, the continuous feature values carry more predictive information than the precise exponent magnitudes.


\paragraph{Predictive performance and fidelity capture different properties.}
The directional rule illustrates the distinction between predictive performance and fidelity. On \textsc{Default}, it essentially matches the predictive performance of the full monomial (PR-AUC $0.381$ versus $0.380$) despite a lower rank fidelity ($\tau_b = 0.744$). Conversely, on \textsc{Loan}, fidelity drops to $0.568$ while predictive performance remains well above the scorecard and tally. Fidelity can therefore erode faster than predictive performance, while the simplified rule remains an effective classifier.

\paragraph{Simplified scores can be reliably calibrated.}
From Table~\ref{tab:cost}, we see that after validation-set isotonic recalibration, ECE remains low across representations and datasets, showing that simplified scores can yield reliable probability estimates despite losses in ranking.


\subsection{When Do Different Simplifications Succeed?}
The results of Table~\ref{tab:cost} show that no representation is uniformly best. Instead, the success of a simplification depends on whether the information it removes is predictive for the underlying dataset. 

The two classifiers illustrate contrasting outcomes. On \textsc{Loan}, pruning reduces 42 features to seven with virtually no predictive loss, while removing effect magnitudes and especially binarizing continuous values is costly. In contrast, \textsc{FraudEcom} retains four features, two of which dominate the others by almost two orders of magnitude. Removing effect magnitudes therefore has little effect (PR-AUC $0.647$ to $0.639$), making the directional rule a simpler yet faithful summary. The remaining datasets reinforce this variation: on \textsc{Default}, the directional rule matches the full monomial closely (PR-AUC $0.381$ versus $0.380$) despite retaining ten to thirteen features, suggesting that useful signal is distributed across many variables. On \textsc{Creditcard}, binarizing the anonymized components reduces the point-based forms to near-base-rate performance, while the directional rule remains stronger. Thus, simplification is not a single ladder of increasingly weaker models: its cost depends on where the monomial's predictive information resides.

\subsection{Predicting Fidelity Before Simplification}\label{sec:pred-fidelity}
Proposition~\ref{prop:fidelity} predicts the rank fidelity of the directional rule from the learned monomial and training data, without constructing or evaluating the simplified rule. Table~\ref{tab:predfid} compares predicted and observed Kendall rank fidelity across five seeds.

\begin{table}[h!]
\centering
\small
\caption{Predicted and observed rank fidelity across five seeds.}
\label{tab:predfid}
\begin{tabular}{lcccc}
\toprule
Dataset &
$\rho$ &
$\tau_{\text{pred}}$ &
$\tau_b$ &
MAE \\
\midrule
\textsc{Loan}
& $0.732 \pm 0.260$
& $0.578 \pm 0.270$
& $0.568 \pm 0.248$
& $0.024$ \\

\textsc{Default}
& $0.911 \pm 0.027$
& $0.731 \pm 0.040$
& $0.744 \pm 0.039$
& $0.019$ \\

\textsc{Creditcard}
& $0.821 \pm 0.018$
& $0.614 \pm 0.019$
& $0.614 \pm 0.026$
& $0.008$ \\

\textsc{FraudEcom}
& $0.906 \pm 0.134$
& $0.759 \pm 0.165$
& $0.831 \pm 0.238$
& $0.236$ \\
\bottomrule
\end{tabular}
\end{table}

On \textsc{Loan}, \textsc{Default}, and \textsc{Creditcard}, the prediction is consistently accurate. Across the fifteen fitted models, the mean absolute error is $0.017$, with a maximum error of $0.039$. The prediction remains accurate over observed fidelities ranging from $0.25$ to $0.83$, providing empirical support for Proposition~\ref{prop:fidelity} on these datasets.

\textsc{FraudEcom} is the exception, with a substantially larger MAE of $0.236$. The failure is driven by variation across fits: in three seeds, $\rho\approx0.96$ corresponds to observed $\tau_b\approx0.98$, whereas in another $\rho=0.973$ but $\tau_b=0.440$. The dominant features are low-cardinality count variables, producing many tied scores. Pearson correlation is insensitive to this tie structure, whereas Kendall's $\tau_b$ accounts for it. Consequently, the Greiner conversion becomes inaccurate, reflecting a failure of its distributional assumptions rather than of the correlation formula itself.

Thus, the fidelity of the directional rule is generally predictable immediately after training. Among the simplifications considered here, it is the only representation whose expected agreement with the original classifier can be estimated before the simplified rule is produced.

\subsection{Human Readability and Preference}
\label{sec:results_human}

Participants evaluated the \textsc{Loan} and \textsc{FraudEcom} representations shown in Figure~\ref{fig:ruleexamples}. 
Objective comprehension was high across all four forms, with 92--100\% of responses correctly identifying how a stated feature condition affected the prediction. The main differences therefore concern perceived ease and practical preference rather than basic understanding. All three simplified rules were perceived as easier to understand than the pruned monomial on both datasets.

\begin{figure}[h!]
    \centering
    \includegraphics[width=0.85\columnwidth]{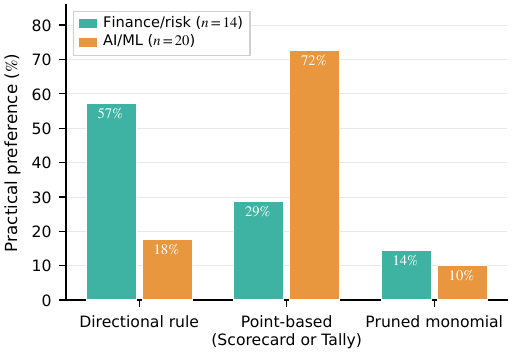}
    \caption{Practical preference by professional background, aggregated
    across \textsc{Loan} and \textsc{FraudEcom}.}
    \label{fig:human-preference}
\end{figure}

Aggregated across both datasets, finance/risk respondents selected the directional rule in $57\%$ of assessments, compared with $29\%$ for the point-based forms. AI/ML researchers showed the opposite pattern, selecting point-based forms in $72\%$ of assessments and the directional rule in $18\%$. The pruned monomial was rarely preferred by either group ($14\%$ and $10\%$, respectively). Thus, while simplification improves perceived ease, the representation preferred in practice differs by professional background.

\begin{figure}[h!]
    \centering
    \includegraphics[width=0.85\columnwidth]{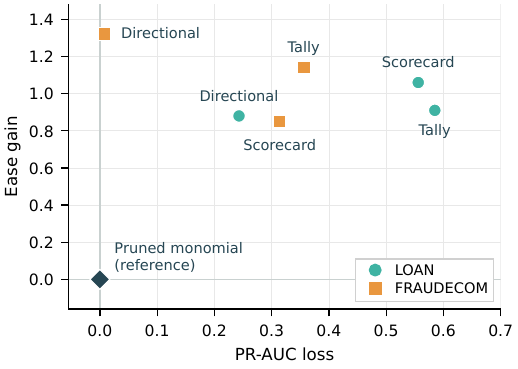}
    \caption{Gain in mean perceived ease versus loss in PR-AUC, both
    relative to the pruned monomial.}
    \label{fig:human-tradeoff}
\end{figure}

Preference alone, however, does not capture what is preserved by a simplification. Figure~\ref{fig:human-tradeoff} plots, relative to the pruned monomial, the increase in mean perceived ease against the loss in PR-AUC. The directional rule gains substantially in perceived ease at lower predictive cost than the point-based forms. This is particularly pronounced on \textsc{FraudEcom}, where PR-AUC changes only from $0.647$ to $0.639$. Participants were not shown predictive performance, so this is an ex-post comparison rather than a trade-off they were asked to make.

\subsection{In-Training Sparsity Recovers Similar Rules}
Finally, we examine whether the extracted rules depend on the post-hoc pruning procedure itself. Table~\ref{tab:iht} compares post-hoc pruning with an in-training sparse variant based on iterative hard thresholding (IHT), reporting the number of retained features $r$, their Jaccard overlap, and predictive performance.  Across all datasets, both approaches retain nearly identical feature sets and achieve comparable predictive performance. 
Feature selection is identical on \textsc{Loan}, \textsc{FraudEcom}, and \textsc{Creditcard}, and differs by only two of twelve features on \textsc{Default}, suggesting that the extracted rules primarily reflect learned data structure rather than pruning artifacts.

\begin{table}[h!]
\centering
\small
\caption{Post-hoc pruning versus in-training sparsity (IHT).}
\label{tab:iht}
\begin{tabular}{lccccc}
\toprule
Dataset &
$r_{\text{prune}}$ &
$r_{\text{IHT}}$ &
Jaccard &
PR-AUC$_{\text{prune}}$ &
PR-AUC$_{\text{IHT}}$ \\
\midrule
\textsc{Loan}       & 6  & 6  & 1.00 & 0.885 & 0.889 \\
\textsc{FraudEcom}  & 3  & 3  & 1.00 & 0.647 & 0.651 \\
\textsc{Default}    & 12 & 12 & 0.83 & 0.377 & 0.381 \\
\textsc{Creditcard} & 3  & 3  & 1.00 & 0.675 & 0.691 \\
\bottomrule
\end{tabular}
\end{table}

\section{Limitations and Future Work}
Our study is deliberately restricted to single monomials ($K=1$) and four financial decision-making datasets, and the findings may not extend directly to richer models or other domains. The readable forms likewise use fixed simplifications, such as median binarization, by design: our aim is to isolate the cost of removing specific information rather than optimize each representation for predictive performance. The results therefore characterize the cost of the proposed simplifications, not the best achievable performance of each rule form. Our human assessment is exploratory, with 36 respondents, and measures stated preferences rather than use in actual decisions. Future work could extend controlled simplification to multi-term signomials ($K>1$), other inherently interpretable models, and larger decision-based human evaluations.

\section{Conclusion}
In financial decision making, an interpretable model can still be too complex to communicate. We studied this gap by progressively reducing learned interpretable equations into pruned equations, directional rules, scorecards, and tallies, and measuring predictive performance, fidelity, and human perception at each stage. The reductions are not points on a single accuracy--readability trade-off. Removing weakly contributing features is nearly free, while removing effect magnitudes is often cheap and its effect on fidelity can be anticipated from the fitted model. Binarizing continuous feature values to derive point-based forms incurs the largest predictive losses. Importantly, predictive performance and fidelity can diverge: a simpler rule may remain an effective classifier without faithfully reproducing the original model. The human assessment confirms that simplification improves perceived ease, while practical preferences in our sample vary by professional background. Readable explanations are therefore better treated as measurable reductions of an interpretable model than assumed faithful because they are easier to read.

\section*{Ethics and Privacy Statement}
We use public financial benchmark datasets without personally identifiable information. While simplification may improve transparency, it can discard information relevant to accurate and equitable decisions and should therefore complement, not replace, domain validation, fairness assessment, and regulatory review.


\bibliographystyle{ACM-Reference-Format}
\bibliography{sample-base}

\appendix

\end{document}